\documentclass[twoside,11pt]{article}
\usepackage{jair, theapa, rawfonts}

\jairheading{1}{1993}{1-15}{6/91}{9/91}
\ShortHeadings{DP-NTK for Privacy-Preserving Data Generation}
{Yang, Adamczewski, Sutherland, Li, \& Park}
\firstpageno{25}
\usepackage[textsize=tiny]{todonotes}
\usepackage{amsthm}
\usepackage{bm}
\usepackage{xcolor}
\usepackage{amsmath}
\usepackage{amssymb}
\usepackage{mathtools}
\usepackage{amsthm}
\usepackage{booktabs}
\usepackage{multirow}
\usepackage{standalone}

\usepackage{float}
\usepackage{graphicx}
\usepackage{caption}
\usepackage{subcaption}

\usepackage{algorithm}
\usepackage{algorithmic}
\usepackage{url}

\DeclareMathOperator{\MMD}{MMD}
\DeclareMathOperator{\eNTK}{e-NTK}

\DeclareMathOperator\argmin{arg\,min}

\usepackage{multirow}
\usepackage[capitalize,noabbrev]{cleveref}

\begin{document}
\newcommand{\norm}[1]{\left\lVert#1\right\rVert}
\newcommand{\cH}{\mathcal{H}}

\newcommand{\httpsurl}[1]{\href{https://#1}{\nolinkurl{#1}}}

\newcommand{\arXiv}[1]{arXiv:\href{https://arxiv.org/abs/#1}{#1}}

\setlength\marginparwidth{1.5cm}

\newcommand{\Xiaoxiao}[1]{\textcolor{blue}{[Xiaoxiao: #1]}}
\newcommand{\pluseq}{\mathrel{+}=}

\newcommand{\punt}[1]{}

\newcommand{\inv}{^{-1}}
\newcommand{\trp}{{^\top}} 
\newcommand{\onehlf}{\frac{1}{2}}
\newcommand{\tonehlf}{\tfrac{1}{2}}
\newcommand{\Nrm}{\mathcal{N}}
\newcommand{\Tr}{\mathrm{Tr}}
\newcommand{\diag}{\mathrm{diag}}


\newcommand{\vx}{\mathbf{x}}

\newcommand{\vl}{\mathbf{l}}
\newcommand{\var}{\mathrm{var}}
\newcommand{\Dat}{\mathcal{D}}

\newcommand{\Ev}{\mathcal{E}}
\newcommand{\vxloc}{\mathcal{X}}
\newcommand{\vkx}{\mathbf{k_x}}
\newcommand{\vv}{\mathbf{v}}
\newcommand{\vky}{\mathbf{k_y}}
\newcommand{\vf}{\mathbf{f}}
\newcommand{\vb}{\mathbf{b}}
\newcommand{\valpha}{\mathbf{\ensuremath{\bm{\alpha}}}}
\newcommand{\vphi}{\mathbf{\ensuremath{\bm{\phi}}}}
\newcommand{\bmu}{\mathbf{\ensuremath{\bm{\mu}}}}
\newcommand{\vu}{\mathbf{u}}
\newcommand{\vk}{\mathbf{k}}
\newcommand{\bK}{\mathbf{K}}
\newcommand{\Vm}{\mathbb{V}} 
\newcommand{\vone}{\mathbf{1}} 
\newcommand{\vm}{\mathbf{m}}
\newcommand{\vz}{\mathbf{z}}
\newcommand{\vw}{\mathbf{w}}
\newcommand{\vr}{\mathbf{r}} 
\newcommand{\vs}{\mathbf{s}} 
\newcommand{\ve}{\mathbf{e}} 
\newcommand{\vd}{\mathbf{d}} 
\newcommand{\fmap}{\bm{\lambda}{_{map}}}
\newcommand{\bphi}{\bm{\phi}}
\newcommand{\phimap}{{\hat {\bm{\phi}}}{_{map}}}
\newcommand{\fmapstar}{\mathbf{f^*_{map}}}
\newcommand{\muf}{\ensuremath{\mu_f}}
\newcommand{\vmuf}{\mathbf{\ensuremath{\bm{\mu}_f}}}
\newcommand{\vpi}{\mathbf{\ensuremath{\bm{\pi}}}}
\newcommand{\vmu}{\mathbf{\ensuremath{\bm{\mu}}}}
\newcommand{\vtheta}{\mathbf{\ensuremath{\bm{\theta}}}}
\newcommand{\LL}{\ensuremath{\mathcal{L}}}
\newcommand{\vlam}{\bm{\lambda}}
\newcommand{\mR}{\mathbf{R}}
\newcommand{\mW}{\mathbf{W}}
\newcommand{\mZ}{\mathbf{Z}}
\newcommand{\mC}{\mathbf{C}}
\newcommand{\mI}{\mathbf{I}}

\newcommand{\vX}{\mathcal{X}}
\newcommand{\vK}{\mathbf{K}}
\newcommand{\mJ}{\mathbf{J}}
\newcommand{\vn}{\mathbf{n}}
\newcommand{\R}{\mathbb{R}}

\newcommand{\Em}{\mathbb{E}}

\newcommand{\set}[1]{\{#1\}}
\newcommand{\kml}{{\hat \vk}_{ML}}
\newcommand{\vy}{\mathbf{y}}
\newcommand{\vp}{\mathbf{p}}
\newcommand{\vg}{\mathbf{g}}
\newcommand{\Lprior}{\Lambda_p}
\newcommand{\Lix}{L_{x}}
\newcommand{\thetmap}{\theta_{\mu}}
\newcommand{\Lmap}{\Lambda_\mu}
\newcommand{\thetli}{\theta_{l}}
\newcommand{\Lli}{\Lambda_{l}}
\newcommand{\nsevar}{\sigma^2}
\newcommand{\nsestd}{\sigma}
\newcommand{\vchi}{\bm{\chi}}
\newcommand{\vomega}{\bm{\omega}}
\newcommand{\tr}{^\top}
\newcommand{\vtau}{\mathbf{\ensuremath{\bm{\tau}}}}
\newcommand{\vkappa}{\mathbf{\ensuremath{\bm{\kappa}}}}


\newcommand{\mj}[1]{{\color{blue}{ mijung : #1}}}
\newcommand{\fh}[1]{{\color{green}{ frederik : #1}}}
\newcommand{\ka}[1]{{\color{violet}{ kamil : #1}}}
\newcommand{\marga}[1]{{\color{olive}{ margarita: #1}}}

\newcommand{\vh}{\mathbf{h}}

\theoremstyle{plain}
\newtheorem{theorem}{Theorem}[section]
\newtheorem{proposition}[theorem]{Proposition}
\newtheorem{lemma}[theorem]{Lemma}
\newtheorem{corollary}[theorem]{Corollary}
\theoremstyle{definition}
\newtheorem{definition}[theorem]{Definition}
\newtheorem{assumption}[theorem]{Assumption}
\theoremstyle{remark}
\newtheorem{remark}[theorem]{Remark}

\title{Differentially Private Neural Tangent Kernels (DP-NTK)
\\ for Privacy-Preserving Data Generation}

\author{\name Yilin Yang \email yangyl17@cs.ubc.ca\\
\addr Department of Computer Science, University of British Columbia, Vancouver, Canada.
    \AND
       \name Kamil Adamczewski \email Kamil.m.adamczewski@gmail.com\\
       \addr ETH Zurich, Switzerland.
       \AND
        \name Xiaoxiao Li \email xiaoxiao.li@ece.ubc.ca \\
       \addr Department of Electrical and Computer Engineering, University of British Columbia, Vancouver, Canada.
        \AND
       \name Danica J.\ Sutherland \email 
dsuth@cs.ubc.ca\\
       \addr Department of Computer Science, University of British Columbia, Vancouver, Canada.\\
       Alberta Machine Intelligence Institute, Edmonton, Canada.
       \AND
       \name Mijung Park \email mijungp@cs.ubc.ca \\
       \addr Department of Computer Science, University of British Columbia, Vancouver, Canada,\\
       Alberta Machine Intelligence Institute, Edmonton, Canada. \\
       Department of Applied Mathematics and Computer Science, Technical University of Denmark, Denmark.
       }


\maketitle

\begin{abstract}
Maximum mean discrepancy (MMD) is a particularly useful distance metric for differentially private data generation: when used with finite-dimensional features, it allows us to summarize and privatize the data distribution once, which we can repeatedly use during generator training without further privacy loss.
An important question in this framework is, then, what features are useful to distinguish between real and synthetic data distributions, and whether those enable us to generate quality synthetic data. This work considers using the features of \textit{neural tangent kernels (NTKs)}, more precisely \textit{empirical} NTKs (e-NTKs). We find that, perhaps surprisingly, the expressiveness of the untrained e-NTK features is comparable to that of the features taken from pre-trained perceptual features using public data. As a result, our method improves the privacy-accuracy trade-off compared to other state-of-the-art methods, without relying on any public data, as demonstrated on several tabular and image benchmark datasets.
\end{abstract}

\section{Introduction}
\label{Introduction}
The initial work on differentially private data generation mainly focuses on a theoretical understanding of  the utility of generated data, under strong constraints on the data type and its intended use
\cite{Xiao_DPDR_trough_partitioning,Mohammed_DPDR_for_data_mining,Hardt_simple_practical_DPDR,Zhu_DPDP_survey}. 
However, recent advances in deep generative modelling provide an appealing, more general path towards 
private machine learning:
if trained on synthetic data samples from a sufficiently faithful, differentially private generative model, any learning algorithm becomes automatically private.
One attempt towards this goal is to privatize standard deep generative models by adding appropriate noise to each step of their training process with DP-SGD \cite{DP_SGD}. 
The
majority of the work under this category is based on the generative adversarial networks (GAN) framework. In these approaches, the gradients of the GAN discriminator are privatized with DP-SGD, which limits the size of the discriminator for a good privacy-accuracy trade-off of DP-SGD \cite{DPGAN,DP_CGAN,PATE_GAN,gs-wgan}. 

Another line of work builds off kernel mean embeddings \cite[Chapter 4]{mean-embeddings-survey,berlinet-thomas-agnan}, which summarize a data distribution by taking the mean of its feature embedding into a reproducing kernel Hilbert space.
The distance between these mean embeddings, known as the Maximum Mean Discrepancy or MMD \cite{jmlr-mmd},
gives a distance between distributions;
one way to train a generative model is to directly minimize (an estimate of) this distance \cite<as in>{dziugaite:mmd,gmmn}.

By choosing a different kernel,
the MMD ``looks at'' data distributions in different ways;
for instance,
the MMD with a short-lengthscale Gaussian kernel will focus on finely localized behaviour and mostly ignore long-range interactions,
while the MMD with a linear kernel is based only on the difference between distributions' means.
Choosing an appropriate kernel, then, has a strong influence on the final behaviour of a generative model.
In non-private models, there has been much exploration of adversarial optimization such as GANs \cite[and thousands of follow-ups]{gans},
which corresponds to learning a kernel to use for the MMD \cite{mmd-gan,binkowski:mmd-gans,arbel:smmd}.

In private settings, however, fixing the kernel for a mean embedding has a compelling advantage:
if we pick a fixed kernel with a finite-dimensional feature embedding,
we can privatize the kernel mean embedding of a target dataset \emph{once},
then repeatedly use the privatized embedding during generator training without incurring any further loss of privacy.
This insight was first used in generative modelling in DP-MERF \cite{dp-merf},
which uses random Fourier features \cite{rffs} for Gaussian kernels;
Hermite polynomial features can provide a better trade-off in DP-HP \cite{dp-hp}.
Kernels based on perceptual features extracted from a deep network can also give far better results, if a related but public dataset is available to train that network, via DP-MEPF \cite{dp-mepf}.

When no such public data is available,
we would still like to use better kernels than Gaussians,
which tend to have poor performance on complex datasets like natural images.
In this work, we turn to a powerful class of kernels known as Neural Tangent Kernels \cite<NTKs;>{jacot:ntk,lee:any-depth,arora:exact-comp,chizat-bach:lazy}.
The ``empirical NTK'' (e-NTK) describes the local training behaviour of a particular neural network,
giving a first-order understanding of ``how the network sees data'' 
(see, e.g., \citeR[Proposition 1]{ren:learning-path}, or \citeR{chizat-bach:lazy}).
As the width of the network grows (with appropriate parameterization), this first-order behaviour dominates and the e-NTK converges to a function that depends on the architecture but not the particular weights, often called ``the NTK.''
(\Cref{sec:background} has more details.)
Standard results show that in this infinite limit, training deep networks with stochastic gradient descent corresponds to kernel regression with the NTK.
More relevantly for our purposes,
common variants of GANs with infinitely-wide discriminator networks
in fact reduce exactly to minimizing the MMD based on an NTK \cite{ntk-gans}, providing significant motivation for our choice of kernel.

In addition to their theoretical motivations,
NTKs have proved to be powerful general-purpose kernels for small-data classification tasks \cite{ntk-on-uci},
perhaps because typical neural network architectures provide ``good-enough'' prior biases towards useful types of functions for such problems.
They have similarly been shown to work well for the problem of statistically testing whether the MMD between two distributions is nonzero (and hence the distributions are different) based on samples \cite{ba:ntk-mmd,cheng:ntk-mmd},
where it can be competitive with learning a problem-specific deep kernel \cite{sutherland:opt-mmd,liu:deep-testing}.
The task of generative modelling is intimately related to this latter problem, since we are aiming for a model indistinguishable from the target data \cite<see further discussion by, e.g.,>{arbel:smmd}.

In this work, we use mean embeddings based on e-NTKs
as targets for our private generative model;
we use e-NTKs rather than infinite NTKs
for their finite-dimensional embeddings and computational efficiency.
We show how to use these kernels within a private generative modelling framework,
in (to the best of our knowledge) the first practical usage of NTK methods in privacy settings.
Doing so yields a high-quality private generative model
for several domains,
outperforming models like DP-MERF and DP-HP as well as models based on DP-SGD.










\section{Background} \label{sec:background}

We begin by providing a brief background on differential privacy, MMD, and NTKs.

\subsection{Differential Privacy (DP)}\label{subsec:DP}

A differentially private mechanism guarantees a limit on how much information can be revealed about any single individual's participation in the dataset:
formally,
a mechanism $\mathcal{M}$ is ($\epsilon$, $\delta$)-differentially private 
if 
for all datasets $\Dat$, $\Dat'$ differing by a single entry
and all sets of possible outputs $S$ for the mechanism,
$
\Pr[\mathcal{M}(\Dat) \in S] \leq e^\epsilon \cdot \Pr[\mathcal{M}(\Dat') \in S] + \delta
$.
The level of information leak is quantified by a chosen value of $\epsilon$ and $\delta$;
smaller values correspond to stronger guarantees.

In this work, we use the \textit{Gaussian mechanism} to ensure the output of our algorithm is DP.
Given a function $h(\Dat)$ with outputs in $\R^d$,
we will enforce privacy by adding a level of noise based on the
\textit{global sensitivity} $\Delta_h$ \cite{dwork2006our}.
This is defined as the maximum $L_2$ norm by which $h$ can differ for neighbouring datasets $\Dat$ and $\Dat'$, $\sup \norm{h(\Dat)-h(\Dat')}_2$.
The mechanism's output is denoted by $\widetilde{h}(\Dat) = h(\Dat) + n$, where $n\sim \Nrm(0, \sigma^2 \Delta_h^2\mathbf{I}_d)$.
The perturbed function $\widetilde{h}(\Dat) $ is $(\epsilon, \delta)$-DP, where $\sigma$ is a function of $\epsilon$ and $\delta$.
While $\sigma = \sqrt{2 \log (1.25/\delta)}/\epsilon$ is sufficient for $\epsilon\leq 1$, numerically computing $\sigma$ given $\epsilon$ and $\delta$ using
the \texttt{auto-dp} package of \citeA{wang2019subsampled} gives a smaller $\sigma$ for the same privacy guarantee.

\subsection{Maximum Mean Discrepancy (MMD)} \label{sec:mmd}


For any positive definite function $k : \mathcal X \times \mathcal X \to \R$,
there exists a Hilbert space $\cH$
and a \emph{feature map} $\phi : \mathcal X \to \cH$
such that $k(x, y) = \langle \phi(x), \phi(y) \rangle_\cH$
for all $x, y \in \mathcal X$:
the kernel $k$ is the inner product between feature maps \cite{aronszajn1950theory}.
%
%
%
%
We can also ``lift'' this definition of a feature map to distributions,
defining the \emph{(kernel) mean embedding} of $P$
as
\begin{equation}
    \mu_P = \Em_{x \sim P}[ \phi(x) ] \in \cH
;\end{equation}
it exists as long as $\Em_{x \sim P} \sqrt{k(x, x)} < \infty$ \cite{Smola2007}.
One definition of the MMD is \cite{jmlr-mmd}
\begin{equation}
\MMD(P,Q) = \norm{\mu_P - \mu_Q}_{\cH}
\label{eq:norm2_mmd}
.\end{equation}
If $k$ is
\textit{characteristic} \cite{Sriperumbudur2011}, then $P \mapsto \bmu_P$
is injective,
and so the MMD is a distance metric on probability measures.
If $k$ is not characteristic, it still satisfies all the requirements of a distance except that we might have $\MMD(P, Q) = 0$ for some $P \ne Q$.

Typically, we do not observe distributions $P$ and $Q$ directly,
but rather estimate \eqref{eq:norm2_mmd} based on i.i.d.\ samples
$\{x_1, \dots, x_m\} \sim P$ and $\{x'_1, \dots, x'_n\} \sim Q$.
Although other estimators exist,
it is often convenient to use a simple ``plug-in estimator'' for the mean embeddings:
\begin{align}\label{eq:MMD_est}
\hat\mu_P = \frac1m \sum_{i=1}^m \phi(x_i)
;\;\;
\widehat{\MMD}(P,Q) = \norm{\hat\mu_P - \hat\mu_Q}_\cH
.\end{align}

For many kernels,
an explicit finite-dimensional feature map $\bphi : \mathcal X \to \R^d$ is available;
for a simple example, the linear kernel $k(x, y) = x^\top y$
has the feature map $\bphi(x) = x$
and mean embedding $\bmu_P = \Em_{x \sim P}[x]$.
The MMD then becomes simply the Euclidean distance between mean embeddings in $\R^d$.
This setting is particularly amenable to differential privacy:
we can apply the Gaussian mechanism to the estimate of the relevant mean embedding in $\R^d$.

The commonly-used Gaussian kernel has an infinite-dimensional embedding;
DP-MERF \cite{dp-merf} therefore approximates it with a kernel corresponding to finite-dimensional random Fourier features \cite{rffs},
while DP-HP \cite{dp-hp} uses a different approximation based on truncating Hermite polynomial features \cite{mehler1866ueber}.
DP-MEPF \cite{dp-mepf} chooses a different kernel, one which is defined in the first place from finite-dimensional features extracted from a deep network.
Our e-NTK features are also finite-dimensional, with a particular choice of neural network defining the kernel.

\subsection{Neural Tangent Kernel (NTK)}
The NTK was defined by \citeA{jacot:ntk},
emerging from a line of previous work on optimization of neural networks.
The empirical NTK, which we call e-NTK,
arises from Taylor expanding the predictions of a deep network $f_\theta$, with parameters $\theta \in \R^d$, and is given by
\[
    \eNTK(x, x') = \left[ \nabla_\theta f_\theta(x) \right]^\top \left[ \nabla_\theta f_\theta(x') \right]
.\]
The gradient here is with respect to all the parameters in the network, treated as a vector in $\R^d$.
The $\eNTK$ is a kernel function,
with an explicit embedding $\vphi(x) = \nabla_\theta f_\theta(x)$.

For networks whose output has multiple dimensions (e.g.\ a multi-class classifier),
we use $f_\theta$ above to refer to the \emph{sum} of those outputs;
this ``sum-of-logits'' scheme approximates the matrix-valued e-NTK \cite{mohamadi:pseudo-ntk}, up to a constant scaling factor which does not affect our usage.
In PyTorch \cite{pytorch}, $\nabla_\theta f_\theta(x)$ can be easily computed by simply calling \texttt{autograd.grad} on the multi-output network, which sums over the output dimension if the network's output is multi-dimensional.

As appropriately-initialized networks of a fixed depth become wider,
the e-NTK 
(a) converges from initialization to a fixed kernel, often called ``the NTK,'' independent of $\theta$;
and (b) remains essentially constant over the course of gradient descent optimization
\cite{jacot:ntk,lee:any-depth,arora:exact-comp,chizat-bach:lazy,yang:ntk-init,yang:ntk-dynamics}.
Algorithms are available to compute this limiting function exactly \cite{arora:exact-comp,neuraltangents2020},
but they no longer have an exact finite-dimensional embedding
and can be significantly more computationally expensive to compute than e-NTKs for ``reasonable-width'' networks.

GANs with discriminators in an appropriate infinite limit
theoretically become MMD minimizers using the NTK of the discriminator architecture \cite{ntk-gans}.
In concurrent work, 
\citeA{gantk} train generative models of a similar form directly,
using the infinite NTK;
unfortunately, the particular model they use
-- which is, although they do not quite say so, the infinite-width limit of a least-squares GAN \cite{lsgan} --
appears much more difficult to privatize.
The MMD with infinite NTKs has also proved useful in statistically testing whether the MMD between two datasets is positive, i.e.\ whether the distributions differ
\cite{ba:ntk-mmd,cheng:ntk-mmd}.

Other applications of the e-NTK include predicting the generalization ability of a network architecture or pre-trained network for fine-tuning \cite{wei2022more,bachmann2022generalization,ntk-lm-tuning},
studying the evolution of networks over time \cite{fort2020deep},
and in approximating selection criteria active learning \cite{mohamadi:active-ntk}.
The infinite NTK has seen more widespread adoption in a variety of application areas; the software page of \citeA{neuraltangents2020} maintains an extensive list.


\begin{algorithm}[H]
  \caption{DP-NTK}
  \label{alg:the_alg}
  \begin{algorithmic}
    \STATE \textit{model} $\gets$ Generative model
    \STATE \textit{model\_NTK} $\gets$ NTK model
    \STATE \textit{Dataloader} $\gets$ Dataloader for training set
    \STATE \textit{d} $\gets$ Length of flattened NTK features, i.e. size of NTK model parameters
    \STATE \textit{n} $\gets$ Size of training data obtained by count 
    \STATE $n_c \gets$ Class count
    \STATE \textit{batch\_size} $\gets$ Size of batch
    \STATE \textit{n\_iter}  $\gets$ Number of iterations
    \STATE mean\_emb1 $\gets d\times n_c$ mean embedding of training data (sensitive), initialize at 0
    \FOR {Data, Label in Dataloader}
    \FOR {\textbf{each} x in Data}
        \STATE $\phi(x) \gets$ flattened NTK features for datapoint x calculated using model\_NTK calculated using \texttt{autograd.grad}
        \STATE $\phi(x) \gets \frac{\phi(x)}{||\phi(x)||}$
        \STATE mean\_emb1$[:,\text{Label}[x]] \pluseq \phi(x)$
    \ENDFOR \textbf{ each}
    \ENDFOR
    \STATE noisy\_mean\_emb1 $\gets$ mean embedding of training data with added noise via the Gaussian mechanism
    \FOR {i in n\_iter}
    \STATE \textit{gen\_code, gen\_label} $\gets$ code\_function
    \STATE \textit{gen\_data} $\gets$ model(\textit{gen\_code})
    \STATE $\text{mean\_emb2} \gets d\times n_c$ mean embedding of generated data (non-sensitive)
    \FOR {\textbf{each} x in gen\_data}
        \STATE $\phi(x) \gets$ flattened NTK features for datapoint x calculated using model\_NTK calculated using \texttt{autograd.grad}
        \STATE $\phi(x) \gets \frac{\phi(x)}{||\phi(x)||}$
        \STATE $\text{mean\_emb2}[:,\text{gen\_label}[x]] \pluseq \phi(x)$
        \ENDFOR \textbf{ each}
        \STATE $\text{loss} = ||\text{noisy\_mean\_emb1} - \text{mean\_emb2}||_{2,1}^2$
        \STATE backward using loss then step
        \STATE per-iter evaluation
    \ENDFOR
    \STATE final accuracy evaluation
  \end{algorithmic}
\end{algorithm}

\section{The DP-NTK model}
Here we present our method, which we call \textit{differentially private neural tangent kernels (DP-NTK)} for privacy-preserving data generation. 


We describe our method in the class-conditional setting; it is straightforward to translate to unconditional generation by, e.g., assigning all data points to the same class.

Following \citeA{dp-merf,dp-hp,dp-mepf}, we encode class information in the MMD objective by defining a kernel for the joint distribution over the input and label pairs in the following way.
Specifically, we define 
\[ k((\vx,\vy), (\vx', \vy')) = k_\vx(\vx, \vx') k_\vy(\vy, \vy')
,\]
where $k_\vy(\vy, \vy') = \vy^\top \vy'$ for one-hot encoded labels $\vy, \vy'$.
For the kernel on the inputs, we use the ``normalized'' e-NTK of a network $f_\theta : \mathcal X \to \R$:
\begin{gather*}
k_\vx(\vx, \vx') = \vphi(\vx) \trp \vphi(\vx),
\\
\mbox{where }
\vphi(\vx) = \nabla_\theta f_\theta(\vx) / \norm{\nabla_\theta f_\theta(\vx)}
.\end{gather*}
The normalization is necessary to bound the sensitivity of the mean embedding.
Recall that if a network outputs vectors, e.g.\ logits for a multiclass classificaton problem, we use $f_\theta$ to denote the sum of those outputs \cite{mohamadi:pseudo-ntk}.

Given a labelled dataset $\{(\vx_i, \vy_i)\}^m_{i=1}$, we represent the mean embedding of the data distribution as
\begin{align}
\label{eq:me_data}
\hat\vmu_{P} = \frac{1}{m}\sum^m_{i=1} \vphi(\vx_i)\vy_i^\top
;\end{align}
this is a $d \times c$ matrix, where $d$ is the dimensionality of the NTK features and $c$ the number of classes.\footnote{This corresponds to a feature space of $d \times c$ matrices, where we use the Frobenius norm and inner product; though this is itself a Hilbert space, this exactly agrees with ``flattening'' these matrices into $\R^{dc}$ and then operating in that Euclidean space.}

\begin{proposition}
    The global sensitivity of the mean embedding \eqref{eq:me_data} is $\Delta_{\bmu_P} = 2 / m$.
\end{proposition}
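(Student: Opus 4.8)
The plan is to compute the global sensitivity directly from its definition as the supremum of $\norm{\hat\vmu_{P}(\Dat) - \hat\vmu_{P}(\Dat')}$ over neighbouring datasets, where---per the footnote following \eqref{eq:me_data}---the relevant norm on the $d \times c$ embedding matrices is the Frobenius norm. I would take the neighbouring relation to be substitution of a single labelled example, so that $\Dat = \{(\vx_i,\vy_i)\}_{i=1}^m$ and $\Dat'$ agree except that some $(\vx_j,\vy_j)$ is replaced by $(\vx_j',\vy_j')$; both datasets then have exactly $m$ points, so the common $\tfrac1m$ prefactor is shared by both embeddings.

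First I would subtract the two embeddings defined in \eqref{eq:me_data}. All shared summands cancel, leaving a single outer-product difference,
\[
\hat\vmu_{P}(\Dat) - \hat\vmu_{P}(\Dat') = \tfrac1m\left( \vphi(\vx_j)\vy_j\trp - \vphi(\vx_j')\vy_j'\trp \right).
\]
Then I apply the triangle inequality together with the outer-product identity $\norm{\vu\vv\trp}_F = \norm{\vu}\,\norm{\vv}$ (the Frobenius norm of a rank-one matrix factorizes as the product of the vector norms). Invoking the normalization $\norm{\vphi(\vx)} = 1$ that is built into the feature map, and the fact that a one-hot label satisfies $\norm{\vy} = 1$, each of the two terms has Frobenius norm exactly $1$, giving
\[
\norm{\hat\vmu_{P}(\Dat) - \hat\vmu_{P}(\Dat')}_F \le \tfrac1m(1 + 1) = \tfrac2m.
\]

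For the matching lower bound I would exhibit a worst case: take $\vy_j = \vy_j'$ (same class, so $\vy_j\trp\vy_j' = 1$) and choose anti-aligned features $\vphi(\vx_j') = -\vphi(\vx_j)$, so the squared Frobenius distance expands to $2 - 2\,\vphi(\vx_j)\trp\vphi(\vx_j') = 4$ and the triangle inequality is met with equality; taking the supremum then yields $\Delta_{\bmu_P} = 2/m$.

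The upper bound is essentially a one-line triangle-inequality argument once the outer-product Frobenius identity and the two unit-norm facts are in place, so the only genuine subtlety is the attainability of $2/m$. Strictly, for a fixed network $f_\theta$ the gradient features need not contain an exactly antipodal pair $\vphi(\vx') = -\vphi(\vx)$, so the exact sensitivity could fall slightly below $2/m$; the value $2/m$ is the tight bound over all admissible feature configurations and is the quantity that must be used to calibrate the Gaussian mechanism, so I would present $2/m$ either as the guaranteed (and generically tight) upper bound or via the antipodal construction as a supremum over feature placements. I would also flag the dependence on the neighbouring convention: under an add/remove relation the two embeddings would carry different prefactors $\tfrac1m$ and $\tfrac1{m-1}$, altering the constant, so it matters that the substitution model is the intended one.
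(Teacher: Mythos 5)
Your proof matches the paper's essentially line for line: both adopt the substitution model of neighbouring datasets, cancel the shared summands to leave a single rank-one difference, and bound it via the triangle inequality together with $\norm{\vphi(\vx)\vy^\top}_F = \norm{\vphi(\vx)}\,\norm{\vy} = 1$. Your closing remarks on attainability are in fact more careful than the paper itself, which states the sensitivity as an equality but, like you, only proves the upper bound $2/m$ --- the quantity that actually calibrates the Gaussian mechanism.
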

\begin{proof}
Using the definition of the global sensitivity,
which bounds for $\Dat, \Dat'$ which differ in only one entry,
and the fact that $y$ is a one-hot vector
and $\vphi$ is normalized,
we have that
\begin{align*}
    \Delta_{\hat{\mu}_p}
    &= \sup_{\Dat, \Dat'} \norm{ \frac{1}{m}\sum^m_{i=1} \vphi(\vx_i) \vy_i^\top - \frac{1}{m}\sum^m_{j=1}\vphi(\vx_j') \vy_j'^\top }_F \\
    &= \sup_{(\vx, \vy ),( \vx', \vy' )}
    \norm{ \frac{1}{m}\vphi(\vx) \vy^\top - \frac{1}{m}\vphi(\vx') {\vy'}^\top }_F
    \\
    &\leq \frac{2}{m} \sup_{(\vx, \vy )} \norm{\vphi(\vx) \vy^\top }_F
    \\
    &= \frac{2}{m} \sup_{\vx} \norm{\vphi(\vx)} 
    \\
    &= \frac{2}{m}
.\qedhere\end{align*}
\end{proof}

We use the Gaussian mechanism to privatize the mean embedding of the data distribution,
\begin{align*}
\Tilde{\bmu}_{P}
&= \hat{\bmu}_{P} + \mathcal{N}(0, \sigma^2\Delta_{\bmu_P}^2\mathbf{I}_d)
\\&= \hat{\bmu}_{P} + \mathcal{N}\left(0, \frac{4 \sigma^2}{m^2} \mathbf{I}_d \right)
.\end{align*}
The privacy parameter $\sigma$ is a function of $\epsilon$ and $\delta$, which we numerically compute using the \texttt{auto-dp} package of \citeA{wang2019subsampled}.



Our generator $G$ produces an input conditioned on a generated label $\vy_i'$, i.e.\ $G(\vz_i, \vy_i') \mapsto \vx_i'$, where $\vy_i'$ is drawn from a uniform distribution over the $c$ classes and each entry of $\vz_i \in \mathbb{R}^{d_z}$ is drawn from a standard Gaussian distribution.   
%
Similar to \eqref{eq:me_data}, the mean embedding of the synthetic data distribution is given by 
\begin{align}
\hat{\boldsymbol{\mu}}_{Q} = \frac{1}{n}\sum^n_{i=1} \phi(\vx'_i) \, {\vy'_i}^\top
.\end{align}
%
%
%
Our privatized MMD loss is given by
\begin{equation} \label{eq:priv-mmd}
\widetilde{\text{MMD}}^2(P, Q) = \norm{\Tilde{\boldsymbol{\mu}}_{P} - \hat{\boldsymbol{\mu}}_{Q}}^2_F, 
\end{equation}
where F is the Frobenius norm.
This loss is differentiable (as was proved in the NTK case by \citeR{ntk-gans}; we compute the derivative with standard automatic differentiation systems).
We minimize \eqref{eq:priv-mmd} with respect to the parameters of the generator $G$ with standard variations on stochastic gradient descent,
where $\Tilde{\boldsymbol\mu}_P$ remains a constant vector
but $\hat{\boldsymbol\mu}_Q$ is computed based on a new batch of generator outputs at each step.
%

For class-imbalanced and heterogeneous datasets, such as the tabular datasets we consider, we use the modified mean embeddings of \citeA[Sections 4.2-4.3]{dp-merf}, replacing their random Fourier features with our e-NTK features. 


\section{Theoretical analysis}

Existing results on this class of models measure the quality of a generative model in terms of its squared MMD to the target distribution.
How useful is that as a metric?
The origin of the name ``maximum mean discrepancy'' is because we have in general that
\[
    \MMD(P, Q) = \sup_{f : \norm f_k \le 1} \Em_{x \sim P} f(x) - \Em_{y \sim Q} f(y)
,\]
where $\norm f_k$ gives the norm of a function $f : \mathcal X \to \R$ under the kernel $k$.
If the squared MMD is small,
no function with small norm under the kernel $k$
can strongly distinguish $P$ and $Q$.
But when our kernel is the NTK of a wide neural network,
the set of functions with small kernel norm
is exactly the functions which can be learned by gradient descent in bounded time
(also see more detailed discussion by \citeR{cheng:ntk-mmd}).
Thus, if $P$ is similar to $Q$,
(roughly) no neural network of the architecture used by our discriminator can distinguish the two distributions.
Although our e-NTK kernel is finite-dimensional and hence not characteristic,
this implies that distributions with small MMD under the NTK should model the distribution well for practical usages.

We can bound this MMD between the target distribution and the private generative model in two steps:
the gap between the private and the non-private model,
plus the gap between the non-private model and the truth.

For the extra loss induced by privacy, we can use
the theoretical analysis of \citeA{dp-mepf},
which is general enough to also apply to DP-NTK.
Their main result shows the following in our setting.

\begin{proposition}
    Fix a target distribution $P$,
    and let $\mathcal Q$ denote some class of probability distributions,
    e.g.\ all distributions which can be realized by setting the parameters of a generator network.
    
    Let $\tilde Q \in \argmin_{Q \in \mathcal Q} \widetilde{\MMD}(P, Q)$ minimize the private loss \eqref{eq:priv-mmd},
    and let $\hat R \in \argmin_{Q \in \mathcal Q}\\ \widehat{\MMD}(P, Q)$ minimize the non-private loss \eqref{eq:MMD_est}.

    Then the sub-optimality of $\tilde Q$ relative to $\hat R$ is given by\footnote{See their Proposition A.5 for specific constants, and plug in $\operatorname{Tr}(\Sigma) = 4 d \sigma^2 / m^2$, $\norm{\Sigma}_F = 4 \sqrt{d} \sigma^2 / m^2$, $\norm{\Sigma}_\mathit{op} = 4 \sigma^2 / m^2$.}
    \begin{align}
      \widehat\MMD^2(P, \tilde Q) - \widehat\MMD^2(P, \hat R) 
      = \mathcal O_p\left(
        \frac{\sigma^2 d}{m^2}
        + \frac{\sigma \sqrt d}{m} \widehat\MMD(P, \hat R)
      \right)
    .\end{align}
\end{proposition}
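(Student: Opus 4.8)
The plan is to treat the private objective \eqref{eq:priv-mmd} as a noisy perturbation of the non-private objective \eqref{eq:MMD_est} and to exploit the optimality of $\tilde Q$ to cancel the $Q$-independent part of the noise, leaving a cross term that I can control by Gaussian concentration. The Gaussian mechanism gives $\Tilde{\bmu}_P = \hat{\bmu}_P + \vn$ with $\vn \sim \mathcal{N}(0, \Sigma)$, $\Sigma = \tfrac{4\sigma^2}{m^2}\mathbf{I}_d$, so for every $Q$
\[
  \widetilde{\MMD}^2(P, Q) = \widehat{\MMD}^2(P, Q) + 2\langle \hat{\bmu}_P - \hat{\bmu}_Q, \vn\rangle + \norm{\vn}^2 ,
\]
and the crucial observation is that the $\norm{\vn}^2$ term is the same for all $Q$.

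First I would invoke the single optimality inequality $\widetilde{\MMD}^2(P, \tilde Q) \le \widetilde{\MMD}^2(P, \hat R)$. Expanding both sides with the identity above, the $\norm{\vn}^2$ terms cancel and, after applying Cauchy--Schwarz and the triangle inequality $\norm{\hat{\bmu}_{\tilde Q} - \hat{\bmu}_{\hat R}} \le \widehat{\MMD}(P, \tilde Q) + \widehat{\MMD}(P, \hat R)$, I obtain
\[
  \widehat{\MMD}^2(P, \tilde Q) - \widehat{\MMD}^2(P, \hat R) \le 2\langle \hat{\bmu}_{\tilde Q} - \hat{\bmu}_{\hat R}, \vn\rangle \le 2\big(\widehat{\MMD}(P, \tilde Q) + \widehat{\MMD}(P, \hat R)\big)\norm{\vn} .
\]
The next step removes the random quantity $\widehat{\MMD}(P, \tilde Q)$ from the right-hand side. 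Reading this as a quadratic in $t = \widehat{\MMD}(P, \tilde Q)$ and completing the square shows $\widehat{\MMD}(P, \tilde Q) \le \widehat{\MMD}(P, \hat R) + 2\norm{\vn}$; substituting back gives
\[
  \widehat{\MMD}^2(P, \tilde Q) - \widehat{\MMD}^2(P, \hat R) \le 4\,\widehat{\MMD}(P, \hat R)\,\norm{\vn} + 4\norm{\vn}^2 .
\]
Finally, since $\Em\norm{\vn}^2 = \Tr\Sigma = 4 d \sigma^2 / m^2$, Markov's inequality yields $\norm{\vn}^2 = \mathcal{O}_p(\Tr\Sigma)$ and $\norm{\vn} = \mathcal{O}_p(\sqrt{\Tr\Sigma}) = \mathcal{O}_p(\sigma\sqrt d/m)$, which reproduces exactly the two terms in the claimed rate.

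The main obstacle is the dependence of $\tilde Q$ on the privatizing noise $\vn$. For a \emph{fixed} comparator the cross term $\langle \hat{\bmu}_P - \hat{\bmu}_{\hat R}, \vn\rangle$ is a mean-zero scalar Gaussian of standard deviation $\tfrac{2\sigma}{m}\widehat{\MMD}(P, \hat R)$, which is dimension-free; the extra $\sqrt d$ in the statement is the price of the worst-case Cauchy--Schwarz control that is forced because $\tilde Q$ is itself selected using $\vn$. A cleaner route, and the one the footnote follows, is to apply Proposition~A.5 of \citeA{dp-mepf} directly: it bounds this sub-optimality for an arbitrary noise covariance $\Sigma$ in terms of $\Tr\Sigma$, $\norm{\Sigma}_F$, and $\norm{\Sigma}_{\mathit{op}}$, already packaging the uniform-over-$\mathcal{Q}$ argument, so it only remains to substitute $\Tr\Sigma = 4d\sigma^2/m^2$, $\norm{\Sigma}_F = 4\sqrt d\,\sigma^2/m^2$, and $\norm{\Sigma}_{\mathit{op}} = 4\sigma^2/m^2$.
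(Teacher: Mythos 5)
Your argument is correct, and it is genuinely more self-contained than what the paper does: the paper gives no derivation of its own, but simply invokes Proposition A.5 of \citeA{dp-mepf} and substitutes $\Tr(\Sigma) = 4d\sigma^2/m^2$, $\norm{\Sigma}_F = 4\sqrt{d}\,\sigma^2/m^2$, $\norm{\Sigma}_{op} = 4\sigma^2/m^2$ --- exactly the ``cleaner route'' you identify in your closing paragraph. Your direct proof is sound at every step: the decomposition $\widetilde{\MMD}^2(P,Q) = \widehat{\MMD}^2(P,Q) + 2\langle \hat{\bmu}_P - \hat{\bmu}_Q, \vn\rangle + \norm{\vn}^2$ with a $Q$-independent $\norm{\vn}^2$, the optimality inequality for $\tilde Q$, Cauchy--Schwarz plus the triangle inequality, the quadratic-root argument giving $\widehat{\MMD}(P,\tilde Q) \le \widehat{\MMD}(P,\hat R) + 2\norm{\vn}$ (the roots of $t^2 - 2\nu t - (r^2+2r\nu)$ are $\nu \pm (r+\nu)$, so indeed $t \le r + 2\nu$), and Markov's inequality for $\norm{\vn}$. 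Two remarks. First, you should state explicitly that the left-hand side is non-negative because $\hat R$ minimizes the non-private loss; this is what turns your one-sided upper bound into the stated $\mathcal O_p$ equality. Second, your diagnosis of where the $\sqrt d$ comes from is exactly right: for a fixed comparator the cross term is a dimension-free Gaussian, and the $\sqrt d$ is the price of controlling the $\vn$-dependent selection of $\tilde Q$ via worst-case Cauchy--Schwarz. The trade-off between the two routes is clear: the paper's citation inherits sharper constants and handles general (non-isotropic) noise covariances, while your elementary derivation makes transparent which term in the rate comes from $\norm{\vn}^2$ (the $\sigma^2 d/m^2$ part) and which from the cross term (the $(\sigma\sqrt d/m)\widehat{\MMD}(P,\hat R)$ part), at the cost of reproving what the cited proposition already packages.
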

The notation $A = \mathcal O_p(B)$ means roughly that with any constant probability, $A = \mathcal O(B)$.
Note that the MMD in the latter term is bounded by $\sqrt 2$,
so we can always achieve a $\mathcal O_p(1 / m)$ rate for the squared MMD.
If $\MMD(P, \hat R) = 0$, though
-- the ``interpolating'' case --
this ``optimistic'' rate shows that the private model is only $\mathcal O_p(1 / m^2)$ worse than the non-private one,
as measured on the particular sample.

Thus, we know that the private minimizer will approximately minimize the non-private loss.
The sub-optimality in that minimization decays at a rate much faster than 
the known rates for the convergence of the non-private minimizer to the best possible model.
The rates obtained by e.g.\ \citeA{dziugaite:mmd} have complex dependence on the generator architecture,
but never give a rate for $\MMD^2(P, \hat R) - \inf_{Q \in \mathcal Q} \MMD^2(P, Q)$ faster than
$\mathcal O_p(1 / \sqrt m)$;
this is far slower than our $\mathcal O_p(1 / m)$ or even $\mathcal O_p(1 / m^2)$ rate for the extra loss due to privacy,
so the private model is (asymptotically) not meaningfully worse than the non-private one.

\section{Experiments}
We test our method, DP-NTK, on popular benchmark image datasets such as MNIST, FashionMNIST, CelebA, and CIFAR10, as well as on 8 benchmark tabular datasets.
For MNIST/FMNIST we use a Conditional CNN as the generator. 
We use ResNet18 as our generator for Cifar10 and CelebA, and a fully connected network for tabular data: for homogenous data, we use 2 hidden layers + ReLU+ batch norm, or 3 hidden layers + ReLU + batch norm plus an additional sigmoid layer for the categorical features for heterogeneous data.

Our code is at 
\url{https://github.com/FreddieNeverLeft/DP-NTK}. 
The readme file of this repository and the section \cref{tab:best_hyper} contain hyper-parameter settings (e.g.,  the architectural choices used for the model whose e-NTK we take) for reproducibility.




\subsection{Generating MNIST and FashionMNIST images}

For our MNIST and FashionMNIST experiments, we choose the e-NTK of a fully-connected 1-hidden-layer neural network architecture with a width parameter $w$, for simplicity. As the NTK features considered here are uniquely related to the width of the NTK network, for MNIST and F-MNIST data we conduct two groups of experiments, one where we test the accuracy of the model under the same Gaussian mechanism noise but with different NTK widths $w$, and the other with the same width ($w = 800$) but with varying privacy noise levels. In this setting, we will also compare results with other DP image-generating methods that do not rely on public data for pre-training, as well as DP-MEPF which uses public data to train a feature extractor model.

Following prior work such as \citeA{dp-merf,dp-hp,dp-mepf}, we quantitatively measure the quality of samples by training a classifier on the synthetic data and then measuring its accuracy when applied to real data. Here, we use two image classifiers: a multilayer perceptron (MLP) and logistic regression.

\begin{table*}[tp]
  \vspace{1ex}
  \centering
  \begin{tabular}{c c c  c c} 
\multicolumn{5}{c}{\textbf{MNIST}}\\
  &   $w = 100$  &  $w = 400$ & $w = 800$ & $w = 1000$ \\
  \hline
  Log Reg &  $0.8423$   & $0.8404$ & $0.84$ & $0.8387$ \\
  \hline
  MLP & $0.8824$  & $0.8811$ & $0.88$ & $0.8805$ \\ 
 \hline
\end{tabular}
\begin{tabular}{c c c c c} 
 \multicolumn{5}{c}{\textbf{FashionMNIST}}\\
  &   $w = 100$   & $w = 400$ & $w = 800$ & $w = 1000$ \\
  \hline
  Log Reg & $0.7640$  & $0.7642$ & $0.7663$ & $0.7643$ \\
  \hline
  MLP & $0.7773$  & $0.7806$ & $0.7838$& $0.7768$  \\ 
 \hline
\end{tabular}
 \caption{Accuracy under different widths, with $(10, 10^{-5})$ DP}
 \label{tab:mnist-widths}
\end{table*}

\begin{figure*}[tp]
    \centering
    \includegraphics[width=0.65\textwidth,height=0.4\textwidth]{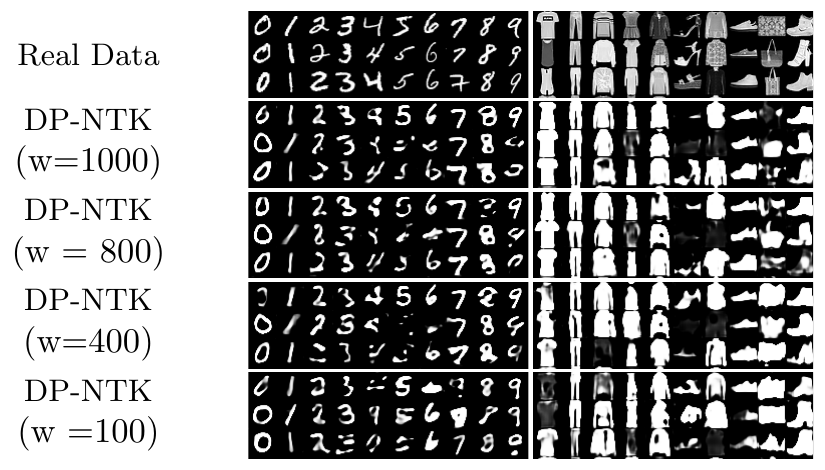}
    \caption{Generated samples of MNIST and FashionMNIST from DP-NTK with different widths $w$; all samples use the same DP noise level ($\epsilon=10$, $\delta = 10^{-5}$).}
    \label{fig:diff_width}
\end{figure*}

\paragraph{Privacy-Width Trade-off}
As discussed e.g.\ by \citeA{dp-hp}, applying the Gaussian mechanism in feature spaces of higher dimension causes our model to become less accurate. This necessitates a trade-off in the feature dimensionality:
large dimensions can lead to overwhelming amounts of added noise,
while small dimensions may be inadequate to serve as a loss for image generation.

\begin{figure*}[t!]
    \centering
    \begin{subfigure}[t]{0.5\textwidth}
        \centering
        \includegraphics[width=\textwidth,height=0.8\textwidth]{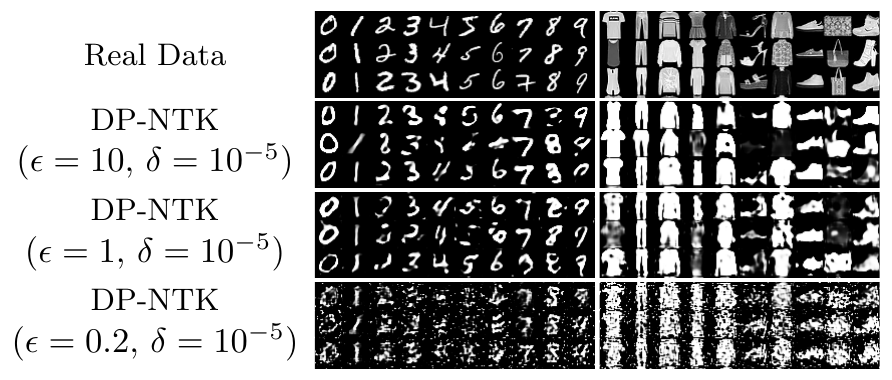}
        \caption{Generated samples for MNIST and FashionMNIST from DP-NTK, with the same width ($w=800$) and different DP levels.}
        \label{fig:same_width}
    \end{subfigure}%
    ~ 
    \begin{subfigure}[t]{0.5\textwidth}
        \centering
        \includegraphics[width=\textwidth,height=0.8\textwidth]{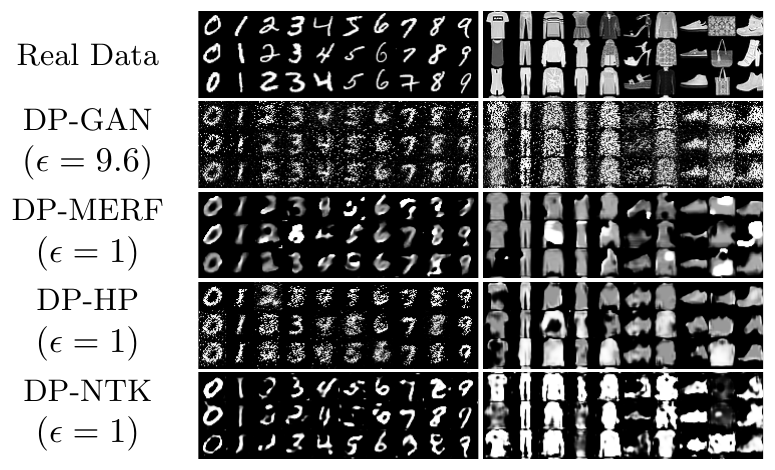}
        \caption{Generated samples for MNIST and FashionMNIST from DP-NTK and comparison models}
        \label{fig:generated_samples}
    \end{subfigure}
    \caption{DP-NTK under different DP levels (left) and comparison results with different models (right) for MNIST and FashionMNIST}
\end{figure*}

\begin{table*}[tp]
\vspace{1ex}
\centering
\scalebox{0.85}{
\begin{tabular}{l|l|ll|ll}
\toprule
\multirow{2}{*}{Method} & \multirow{2}{*}{DP-$\epsilon$} & \multicolumn{2}{l|}{MNIST} & \multicolumn{2}{l}{Fashion-MNIST} \\ \cline{3-6} 
              &           &Log Reg& MLP & Log Reg & MLP \\ \hline
\textbf{DP-NTK (ours)}       & 0.2  & \textit{0.804}& \textit{0.794}& 0.634 & 0.694 \\
DPDM      \cite{dockhorn2023dpdm} & 0.2       &   \textbf{0.81}& \textbf{0.817}& 0.704& \textbf{0.713}\\
PEARL \cite{PEARL}        & 0.2       & 0.762& 0.771& 0.700& 0.708\\
DP-MERF \cite{dp-merf}       & 0.2       &  0.772& 0.768& \textbf{0.714} & 0.696\\\hline
\textbf{DP-NTK (ours)} & 1 & \textit{0.8324}& \textit{0.8620}&\textit{0.7596} & \textit{0.7645}\\
DPDM      \cite{dockhorn2023dpdm} & 1       &   \textbf{0.867}& \textbf{0.916}& \textbf{0.763}& \textbf{0.769}\\
PEARL      \cite{PEARL}   & 1         & 0.76& 0.796& 0.744& 0.740\\
DP-HP      \cite{dp-hp} & 1       &   0.807& 0.801& 0.7387& 0.7215\\
DP-MERF     \cite{dp-merf}  & 1       & 0.769& 0.807& 0.728&  0.738  \\\hline
\textbf{DP-NTK (ours)}        & 10        & \textit{0.8400}& \textit{0.8800}& \textit{0.7663} & \textit{0.7838} \\
DPDM \cite{dockhorn2023dpdm}  & 10        & \textbf{0.908}& \textbf{0.948}&  \textbf{0.811}& \textbf{0.830}\\
PEARL     \cite{PEARL}    & 10        & 0.765& 0.783&  0.726& 0.732\\
DP-Sinkhorn \cite{dp_sinkhorn}  & 10        & 0.828& 0.827&  0.751& 0.746\\
DP-CGAN    \cite{DP_CGAN}   & 10        &   0.6&   0.6& 0.51&  0.5\\
DP-HP      \cite{dp-hp} & 10       &   0.8082& 0.8038& 0.7387& 0.7168\\
DP-MERF     \cite{dp-merf}  & 10        & 0.794& 0.783& 0.755&   0.745  \\
GS-WGAN     \cite{gs-wgan}  & 10        &  0.79&  0.79& 0.68&   0.65 \\
\hline
\end{tabular}}
\caption{Performance comparison on MNIST and F-MNIST dataset averaged over five independent runs, with NTK width fixed at 800. Here $\delta$ is fixed at $10^{-5}$.}
\label{tab:mnist-dps}
\end{table*}

\Cref{fig:diff_width,tab:mnist-widths} show that for both MNIST and FashionMNIST, changing the width does somewhat affect the final accuracy and image quality, but this effect is very minimal. Therefore, for subsequent experiments, we choose a width of $800$ as a good compromise.

\paragraph{Varying privacy levels}
\cref{tab:mnist-dps,fig:same_width} show our model's performance under different levels of privacy. Other than for FashionMNIST with $\epsilon = 0.2$, the performance of our model does not degrade significantly as the privacy requirement becomes more stringent.

\Cref{tab:mnist-dps} also compares with several existing models,
while \cref{fig:generated_samples} visually compares results 
with DP-GAN \cite{DPGAN}, DP-MERF \cite{dp-merf} and DP-HP \cite{dp-hp}.
We see that even with simple architectures, DP-NTK broadly performs better than other high-accuracy models, and generates comprehensible images. Although there is still distance from the current SOTA \cite{dockhorn2023dpdm}, we show that DP-NTK still offers competitive performance under different privacy levels on MNIST and FashionMNIST.
We also compare the performance of DP-MEPF (that relies on public data for pre-training a feature extractor model) and DP-NTK on MNIST and FashionMNIST datasets in \Cref{tab:dpmepfvsdpntk}. 

\begin{table}[t]
\vspace{1ex}
\centering
\scalebox{1}{
\begin{tabular}{ll|ll|ll}
\toprule
\multirow{2}{*}{Method} & \multirow{2}{*}{DP-$\epsilon$} & \multicolumn{2}{l|}{MNIST} & \multicolumn{2}{l}{Fashion-MNIST} \\ \cline{3-6} 
              &           &Log Reg& MLP & Log Reg & MLP \\ \hline
\textbf{DP-NTK (ours)}       & 0.2  & \textbf{0.804}& \textbf{0.794}& 0.634 & 0.694 \\
DP-MEPF \cite{dp-mepf}       & 0.2       &  0.80& 0.76& \textbf{0.73} & \textbf{0.7}\\\hline
\textbf{DP-NTK (ours)} & 1 & \textbf{0.8324}& 0.8620&\textbf{0.7596} & \textbf{0.7645}\\
DP-MEPF     \cite{dp-mepf}  & 1       & 0.82& \textbf{0.89}& 0.75&  0.75  \\\hline
\textbf{DP-NTK (ours)}        & 10        & \textbf{0.8400}& 0.8800& \textbf{0.7663} & \textbf{0.7838} \\
DP-MEPF     \cite{dp-mepf}  & 10        & 0.83& \textbf{0.89}& 0.76&   0.76 \\ \hline 
\end{tabular}}
\caption{Performance comparison on MNIST and F-MNIST dataset for our method with NTK width fixed at 800 and DP-MEPF \cite{dp-mepf}. Here $\delta$ is fixed at $10^{-5}$.}
\label{tab:dpmepfvsdpntk}
\end{table}



\begin{figure}[H]
\centering
\includegraphics[width=0.6\textwidth]{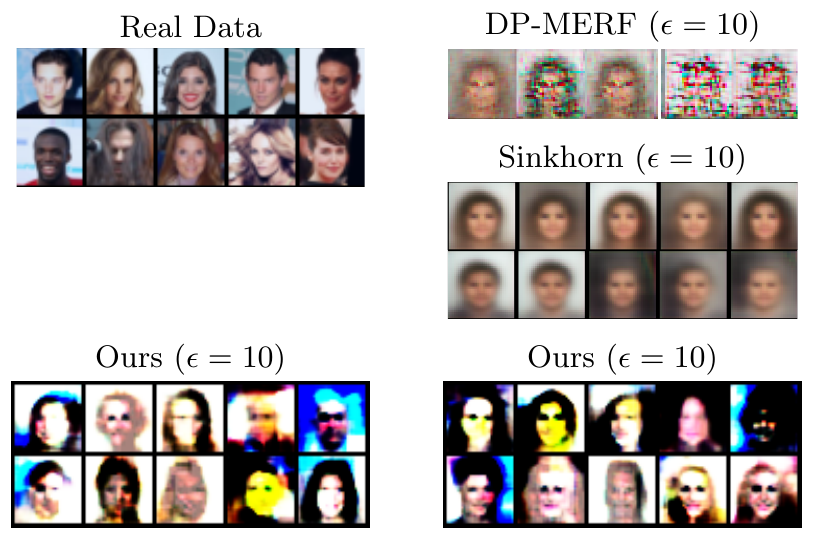}
\caption{Synthetic $32 \times 32$ CelebA samples generated at different levels of privacy. Samples for DP-MERF and DP-Sinkhorn are taken from \citeA{dp_sinkhorn}. Our method yields samples of higher visual quality than the comparison methods. The FID for the proposed method is 75. FID for DP-Sinkhorn is 189. FID for DP-MERF is 274.}
\label{fig:celeba_samples}
\end{figure}

\begin{figure}[H]
\centering
\includegraphics[width=0.5\textwidth]{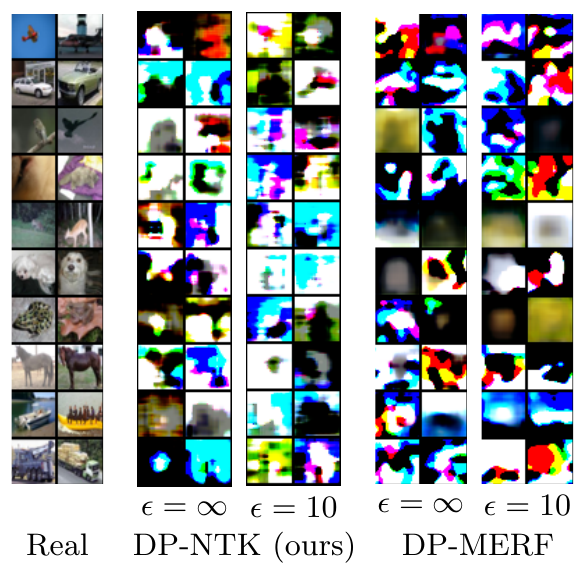}
\caption{The generated and real images for the CIFAR-10 dataset. The FID scores for the proposed method are 104  ($\epsilon=\infty$) and 107  ($\epsilon=10$), respectively. For DP-MERF, they are 127 ($\epsilon=\infty$) and 141 ($\epsilon=10$).}
\label{fig:cifar_samples}
\end{figure}

\subsection{Generating CelebA and CIFAR10 images}

Image datasets beyond MNIST are notoriously challenging for differentially-private data generation. In the Appendix,
\Cref{fig:celeba_samples} on CelebA (at $32 \times 32$ resolution).
On CelebA, DP-MERF and DP-Sinkhorn give blurry, near-uniform samples,
with DP-MERF also containing significant amounts of very obvious pixel-level noise.
Our method, by contrast, generates sharper, more diverse images with some distinguishable facial features,
although generation quality remains far behind that of non-private models. 

\Cref{fig:cifar_samples} shows results on CIFAR-10,
where the difficulty lies in having relatively few samples for a wide range of image objects.
Our samples, while again far from the quality of non-private generators,
are less blurry than DP-MERF and contain shapes reminiscent of real-world objects.

\begin{table*}[t]
\vspace{1ex}
\centering
\scalebox{0.85}{

\begin{tabular}{l *{5}{|cc|cc} }
\toprule
& \multicolumn{2}{ c| }{Real}  & \multicolumn{2}{ c| }{DP-CGAN}  &
\multicolumn{2}{ c| }{DP-GAN}  &
\multicolumn{2}{ c| }{{DP-MERF}} &
\multicolumn{2}{ c| }{DP-HP} & \multicolumn{2}{ c| }{\textbf{DP-NTK}}   \\ 

&\multicolumn{2}{ c| }{} &   \multicolumn{2}{ c| }{($1,10^{-5}$)-DP} & \multicolumn{2}{ c| }{($1,10^{-5}$)-DP} & 
\multicolumn{2}{ c| }{($1,10^{-5}$)-DP} &
\multicolumn{2}{ c| }{($1,10^{-5}$)-DP} &
\multicolumn{2}{ c|}{($1,10^{-5}$)-DP}
\\ 

\hline
\textbf{adult} & 0.786 & 0.683 &  0.509 & 0.444 & 0.511 & 0.445 & 0.642 & 0.524 & 0.688 & \textbf{0.632} & \textbf{0.695} & 0.557\\
\textbf{census} & 0.776 & 0.433   & 0.655 & 0.216 & 0.529 & 0.166 & 0.685 & 0.236 & 0.699 & 0.328 & \textbf{0.71} &	0.\textbf{424}\\
\textbf{cervical} & 0.959 & 0.858  & 0.519 & 0.200 & 0.485 & 0.183 & 0.531 & 0.176 & 0.616 & 0.31 & \textbf{0.631} &	\textbf{0.335}\\
\textbf{credit} & 0.924 & 0.864 &  0.664 & 0.356 & 0.435 & 0.150 & 0.751 & 0.622 & 0.786 & 0.744 & \textbf{0.821} &	\textbf{0.759}\\
\textbf{epileptic} & 0.808 & 0.636 &  0.578 & 0.241 & 0.505 & 0.196 & 0.605 & 0.316 & 0.609 & \textbf{0.554} & \textbf{0.648} &	0.326\\
\textbf{isolet} & 0.895 & 0.741 & 0.511 & 0.198 & 0.540 & 0.205 & 0.557 & 0.228 & \textbf{0.572} & \textbf{0.498} & 0.53	& 0.205  \\ 

& \multicolumn{2}{ c| }{F1}
& \multicolumn{2}{ c| }{F1}
& \multicolumn{2}{ c| }{F1}
& \multicolumn{2}{ c| }{F1}
& \multicolumn{2}{ c| }{F1}
& \multicolumn{2}{ c|  }{F1}

\\ 
\textbf{covtype} & \multicolumn{2}{ c| }{0.820} &
\multicolumn{2}{ c| }{0.285} &
\multicolumn{2}{ c| }{0.492} &
\multicolumn{2}{ c| }{0.467} &  \multicolumn{2}{ c| }{0.537} &  \multicolumn{2}{ c| }{\textbf{0.552}}\\

\textbf{intrusion} & \multicolumn{2}{ c| }{0.971} & 
\multicolumn{2}{ c| }{0.302} & 
\multicolumn{2}{ c| }{0.251} &
\multicolumn{2}{ c| }{\textbf{0.892}} & \multicolumn{2}{ c| }{0.890}  & \multicolumn{2}{ c| }{0.717} \\ 
\bottomrule
\end{tabular}}
\caption{Performance comparison on Tabular datasets averaged over five independent runs. The top six datasets contain binary labels while the bottom two datasets contain multi-class labels. The metric for the binary datasets is ROC and PRC and for the multiclass datasets F1 score.
}
\label{tab:summary_tabular}
\end{table*}
\subsection{Generating tabular data}
For DP-NTK, we explore the performance of the algorithm on eight different imbalanced tabular datasets with both numerical and categorical input features. The numerical features on those tabular datasets can be either discrete (e.g.\ age in years) or continuous (e.g.\ height), and the categorical ones may be binary (e.g.\ drug vs placebo group) or multi-class (e.g.\ nationality).

For evaluation, we use ROC (area under the receiver characteristic curve) and PRC (area under the precision recall curve) for datasets with binary labels, and F1 score for datasets with
multi-class labels. \Cref{tab:summary_tabular} shows the average over the 12 classifiers trained on the generated samples (also averaged over 5 independent seeds). For most of the datasets, DP-NTK outperforms the benchmark methods. 

\section{Summary and Discussion}
We introduced DP-NTK,
a new model for differentially private generative modelling.
NTK features are an excellent fit for this problem,
providing a data-independent representation that performs well on a variety of tasks:
DP-NTK greatly outperforms existing private generative models 
(other than those relying on the existence of similar public data for pre-training)
for relatively simple image datasets and a majority of tabular datasets that we evaluated.

There are, however, certainly avenues that could improve the DP-NTK model in future work.
One path might be finding a way to use the infinite NTK, which tends to have better performance on practical problems than e-NTKs at initialization \cite{ntk-on-uci}.

When related public datasets are available for pretraining,
it would also be helpful for DP-NTK to be able to incorporate that information,
as \citeA{dp-mepf} have argued is important.
Indeed, by exploiting this information, their model outperforms ours on some image datasets. 
One strategy would be to take the e-NTK of a pretrained model on that related dataset;
our initial attempt at this did not help much, however,
and we leave further investigation to future work.


\vskip 0.2in
\bibliography{DP-NTK}
\bibliographystyle{theapa}


\appendix



\section{Hyperparameters Used in Experiments}
\begin{table}[H]
\label{tab:best_hyper}
\vspace{1ex}
\centering
\scalebox{1}{
    \begin{tabular}{|l|l|l|l|l|l|l|l|}
    \hline
        dataset & iter & d\_code & ntk width & batch & lr & eps & architecture \\ \hline
        dmnist & 2000 & 5 & 800 & 5000 & 0.01 & 10, 1, 0.2 & fc\_1l \\ \hline
        fmnist & 2000 & 5 & 800 & 5000 & 0.01 & 10, 1, 0.2 & fc\_1l \\ \hline
        celeba & 20000 & 141 & 3000\_200 & 1000 & 0.01 & 10 & fc\_2l \\ \hline
        cifar10 & 40000 & 201 & 3000\_200 & 1000 & 0.01 & 10 1 & fc\_2l \\ \hline
        cifar10 & 20000 & 31 & 800\_1000 & 200 & 0.01 & None & fc\_2l \\ \hline
        adult & 50 & 11 & 30\_200 & 200 & 0.01 & 1 & cnn\_2l \\ \hline
        census & 2000 & 21 & 30\_20 & 200 & 0.01 & 1 & cnn\_2l \\ \hline
        cervical & 500 & 11 & 800\_1000 & 200 & 0.01 & 1 & cnn\_2l \\ \hline
        credit & 500 & 11 & 1500 & 200 & 0.01 & 1 & fc\_1l \\ \hline
        epileptic & 2000 & 101 & 50\_20 & 200 & 0.01 & 1 & cnn\_2l \\ \hline
        isolet & 1000 & 21 & 10\_20 & 200 & 0.01 & 1 & cnn\_2l \\ \hline
        covtype & 1000 & 101 & 100\_20 & 200 & 0.01 & 1 & cnn\_2l \\ \hline
        intrusion & 1000 & 21 & 30\_1000 & 200 & 0.01 & 1 & fc\_2l \\ \hline
    \end{tabular}}
\caption{Best Hyperparamters used at different seeds for our experiments, see our repo for details.}
\end{table}

\end{document}